\newcommand{\mytitle}{Bias-Robust Bayesian Optimization via Dueling Bandits}
\icmltitlerunning{\mytitle}
\newcommand{\ip}[1]{\langle#1\rangle}
\newtcbox{\entoure}[1][red]{on line,
	arc=3pt,colback=#1!10!white,colframe=#1!50!black,
	before upper={\rule[-3pt]{0pt}{10pt}},boxrule=1pt,
	boxsep=0pt,left=2pt,right=2pt,top=1pt,bottom=-1pt}
\newcommand{\Uniform}{\text{Uniform}}
\newcommand{\Bernoulli}{\text{Bernoulli}}
\newcommand{\xx}[1]{#1^{1},#1^{2}}
\newcommand{\xxt}{\xx{x_t}}
\newcommand{\xxx}{\xx{x}}
\newcommand{\dd}{d}
\newcommand{\deq}{\triangleq}
\newcommand{\fopt}{f}
\newcommand{\hx}{\hat x^*}
\begin{document}

\twocolumn[
\icmltitle{\mytitle}



\icmlsetsymbol{equal}{*}

\begin{icmlauthorlist}
\icmlauthor{Johannes Kirschner}{eth}
\icmlauthor{Andreas Krause}{eth}
\end{icmlauthorlist}

\icmlaffiliation{eth}{Department of Computer Science, ETH Zurich}

\icmlcorrespondingauthor{Johannes Kirschner}{jkirschner@inf.ethz.ch}

\icmlkeywords{Machine Learning, ICML}

\vskip 0.3in
]



\printAffiliationsAndNotice{}  

\begin{abstract}
We consider Bayesian optimization in settings where observations can be {\em adversarially biased}, for example by an uncontrolled hidden confounder. Our first contribution is a reduction of the confounded setting to the {\em dueling bandit} model. Then we propose a novel approach for dueling bandits based on {\em information-directed sampling (IDS)}. Thereby, we obtain the first efficient kernelized algorithm for dueling bandits that comes with cumulative regret guarantees. Our analysis further generalizes a previously proposed semi-parametric linear bandit model to non-linear reward functions, and uncovers interesting links to doubly-robust estimation.
\end{abstract}
\section{Introduction}

Bayesian optimization \citep{Mockus1982} is a model-based approach for zero-order global optimization with noisy feedback. It has been successfully applied to many applications such as hyper-parameter tuning of machine learning models, robotics and chemical design. Some variants such as Expected-Improvement \citep{bull2011convergence} or the GP-UCB algorithm \cite{Srinivas2009} come with theoretical guarantees, ensuring convergence to the global optimum in finite time under suitable regularity assumptions. Closely related is the field of bandit algorithms \citep{lattimore2018bandit}, in particular the linear bandit model \cite{abe1999associative,dani2008stochastic,Abbasi2011improved}. 

Most linear bandit algorithms and Bayesian optimization approaches require that the true function is realized in a known reproducing kernel Hilbert space (RKHS) and rely on {\em unbiased} evaluations of the objective. The regularity assumptions raise questions of robustness to miss-specification and adversarial attacks, and addressing these limitations has been the content of several recent works \cite{lykouris2018stochastic,bogunovic2020stochastic,bogunovic2020corruption}. 

We study a setting where the learner's objective is to maximize an unknown function $\fopt: \xX \rightarrow \RR$ with {\em additive confounded feedback},
\begin{align}
	y_t = \fopt(x_t) + b_t + \epsilon_t\,, \label{eq:feedback-intro}
\end{align}
\looseness -1 where $x_t$ is the evaluation point chosen by the learner at time $t$, $\epsilon_t$ is $\sigma^2$-sub-Gaussian (zero-mean) observation noise, and $b_t$ is an additive confounding term. We assume that $b_t$ is chosen by an adversary, but does \emph{not} depend on the input $x_t$. The bias term allows to model the influence of an unobserved and uncontrolled covariate, or a perturbation of the feedback signal imposed by an adversary. One can also interpret \eqref{eq:feedback-intro} as a \emph{contextual} model, where $b_t$ captures the effect of a changing (unobserved) context on the reward. We discuss further examples and applications in Section \ref{ss:examples} below.

\looseness -1 The proposed feedback model \eqref{eq:feedback-intro} generalizes the semi-parametric contextual bandit model studied by \citet{krishnamurthy2018semiparametric}, where $\fopt(x_t) = \ip{x_t, \theta}$ is a linear function defined by a parameter $\theta \in \RR^d$. They show that a doubly-robust least-squares estimator allows to recover reward differences $\ip{x - x', \theta}$ for inputs $x, x' \in \RR^d$ despite the confounding. They further propose an elimination-style algorithm, \emph{bandit orthogonalized semiparametric estimation} (BOSE), which is based on sampling actions from a distribution that minimizes the variance of the estimator. However, finding low variance distributions requires solving a convex-quadratic feasibility problem, which is computationally demanding and in  general leads to sampling distributions with support that spans $\RR^d$.

\paragraph{Contributions}  Our first contribution is two reductions of the confounded feedback model \eqref{eq:feedback-intro} to the {\em dueling bandit} setting. This allows us to leverage existing algorithms for dueling bandits in the confounded observation setting. We then propose the first efficient algorithm for kernelized dueling bandits that comes with theoretical guarantees on the cumulative regret. The approach is based on {\em information-directed sampling (IDS)}, which was recently studied in the context of linear partial monitoring by \citet{kirschner20partialmonitoring}. In particular, we propose an efficient approximation of IDS, that reduces the computation complexity from $\oO(|\xX|^4)$ to $\oO(|\xX|)$ on finite action sets $\xX$. For continuous action sets, the proposed algorithm requires to optimize a (non-convex) acquisition function over the input space, akin to standard Bayesian optimization. 

\subsection{Motivating Examples}\label{ss:examples}

We start by motivating our problem through applications.

\paragraph{Range-Adjusting Measurement Devices} In many real-world optimization tasks, the observed feedback arises from a physical sensing device. Such measurement devices can be subject to calibration errors or might automatically adjust the output range for better sensitivity. For example, in optimization of free electron lasers \cite{kirschner2019adaptive,duris2020bayesian}, the target signal is measured with a gas-detector, which exploits a physical law to amplify the signal. An input voltage is used to control the amplification factor, which requires re-adjustment\footnote{Translating the range-adjusted signal into an absolute value is possible, but not straightforward and comes with other limitations.} with increasing target signal such that the physical relationship between the target and the measured output stays approximately linear\footnote{More precisely, the relationship depends on the photon energy, pulse intensity, and physical properties of the gas involved.} \citep{sorokin2019x, juranic2018swissfel}. Our feedback model allows optimization that is robust to absolute changes in the target signal occurring at any time.

\paragraph{Distributed Optimization of Additive Functions} In high-dimensional settings, previous work on Bayesian optimization often uses structural assumptions to reduce the dependence of the sample complexity on the dimension. One popular choice is additivity \citep{kandasamy2015high}, for example coordinate-wise $\fopt(x) = \sum_{i \in [d]} f_i(x^i)$. To optimize additive functions, we can apply $d$ individual learners to optimize each 1-dimensional component $f_i$ separately. Note that the learners cannot directly evaluate $f_i(x_i)$, but only obtain the global noisy feedback $f(x) + \epsilon$, which depends on the choices $x = (x^1,\dots, x^d)$ of \emph{all} learners. When the learners act in parallel and there is \emph{no communication} possible, the feedback of each learner is confounded as in \eqref{eq:feedback-intro} by the other learner's choices. Our robust approach guarantees that the learners are able to optimize each component successfully, despite the confounding.

\paragraph{Adversarial Attacks} Robustness to adversarial attacks was studied recently in the context of bandit algorithms \citep{lykouris2018stochastic,bogunovic2020stochastic} and Bayesian optimization \cite{bogunovic2020corruption}. In all previous work that we are aware of, the corruption of the feedback is allowed to depend on the actions, with varying assumptions of whether or not the adversary observes the action choice of the learner. Note that our feedback model is more stringent, as it does not allow for action dependence. On the other hand, our model allows for sublinear regret even with a constant corruption in \emph{every} round, whereas in previous work, the regret scales with the total amount of corruption. \looseness=-1

\subsection{Related Work}

\looseness -1 There is a vast amount of literature on bandit algorithms \cite{lattimore2018bandit} and Bayesian optimization \cite{Mockus1982,Srinivas2009,Shahriari2016,frazier2018tutorial}. Our feedback model is a generalization of the semi-parametric linear bandit setting proposed by \citet{krishnamurthy2018semiparametric} with applications for example in mobile health \cite{tewari2017ads}. A variant of Thompson sampling was analyzed in the same setting by \citet{kim2019contextual}, which they show outperforms the BOSE approach by \citet{krishnamurthy2018semiparametric}, but the frequentist regret bound they derive has an extra factor in the dimension. The Thompson sampling variant is also computationally more efficient, but requires to explicitly compute the probabilities that each action is optimal under the posterior distribution. For Bayesian optimization, robust variants have been considered recently, for example with adversarial perturbations of the input \citep{bogunovic2018adversarially}, corruption of the output \citep{bogunovic2020corruption} and distributionally robust optimization \citep{kirschner2020distributionally}. In the context of adversarial attacks, there is an increasing body of work \cite{lykouris2018stochastic,li2019stochastic,liu2020action,gupta2019better,bogunovic2020corruption,bogunovic2020stochastic}, however the feedback model differs from ours, see also the discussion in Section \ref{ss:examples}.

Of particular relevance to our work is the (stochastic) dueling bandit setting \cite{yue2012k,sui2018advancements,bengs2021preference} and kernelized variants \cite{sui2017correlational,sui2017multi}. Early work by \citet{yue2009interactively} applied the dueling bandit model to the optimization setting with continuous action sets and concave reward functions, and established a connection to gradient-based optimization. However, to the best of our knowledge, none of the previous works provide bounds on the cumulative regret in the kernelized (non-concave) setting. A kernelized algorithm with theoretical guarantees that requires point evaluations \emph{and} dueling feedback is by \citet{xu2020zeroth}. Closely related is also the work by \citet{saha2020regret} on linear dueling bandits with possibly infinite input spaces. They establish a connection to the generalized linear bandit model (GLM) and propose an algorithm which, similar to ours, relies on finding an informative action pair with low regret. However, gap-dependent bounds and a kernelized variant was not provided, and for finite action sets $\xX$, their algorithm requires $\oO(|\xX|^2)$ computation steps per round. Recent work by \citet{agarwal2021stochastic} considers the finite-armed dueling bandit setting with adversarial corruptions of the feedback.
Lastly, we remark that most previous work on dueling bandits considers binary feedback, whereas here we are interested in quantitative feedback on the reward-difference between the chosen action pair. Formally, we consider sub-Gaussian dueling feedback which includes the Bernoulli likelihood, but does not exploit the heteroscedasticity of binary observations.

\section{Setting}

Let $\xX \subset \RR^d$ be a compact input space and $\fopt: \xX \rightarrow \RR$ a fixed and unknown objective function. In each round $t=1, \dots, n$, the learner chooses an action $x_t \in \xX$ and observes the confounded outcome
\begin{align*} 
	y_t = \fopt(x_t) + b_t + \epsilon_t\,,
\end{align*}
where $\epsilon_t$ is $\sigma^2$-sub-Gaussian, conditionally independent noise, and $b_t \in \RR^d$ is an unobserved and possibly time-dependent confounding term. We assume that $b_t$ does not depend on the current action $x_t$ chosen by the learner, and satisfies one of the following assumptions:
\begin{enumerate}[a)]
	\item The bias $b_t$ is bounded, $|b_t| \leq C_{max}$ and fixed at the beginning of round $t$, but can otherwise arbitrarily depend on $(x_s, y_s)_{s=1}^{t-1}$.
	\item The difference between two consecutive bias terms is bounded, $|b_t - b_{t-1}| \leq D_{max}$ and $b_t$ is fixed at the beginning of round $t-1$, but can otherwise arbitrarily depend on $(x_s, y_s)_{s=1}^{t-2}$.
\end{enumerate}
Which assumption is used is specified in the relevant context.
Let $x^* \in \argmax_{x \in \xX} \fopt(x)$ be the optimal action. 
The suboptimality gap is $\Delta(x) = \fopt(x^*) - \fopt(x)$. The learner's objective is to maximize  the cumulative reward $\sum_{t=1}^n \fopt(x_t)$, or equivalently minimize the {\em regret},
\begin{align*}
	R_n = \sum_{t=1}^n \fopt(x^*) - \fopt(x_t) = \sum_{t=1}^n \Delta(x_t)\,.
\end{align*}
\looseness=-1 For the analysis, we assume that the function $\fopt$ is in a known \emph{reproducing kernel Hilbert space} (RKHS) $\hH$ with associated kernel $k : \xX \times \xX \rightarrow \RR$ and bounded Hilbert norm $\|f\|_{\hH} \leq B$. This is a standard assumption in Bayesian optimization \citep{Srinivas2009,Chowdhury2017} and justifies the use of kernelized least-squares regression, formally introduced in Section \ref{sec:ids}. We further require that the kernel function is bounded, $k(x,x) \leq 1$.

\section{Reduction to Dueling Bandits}

\looseness -1 It is clear from the observation model \eqref{eq:feedback-intro} that any additive shift of the objective, i.e.~$\tilde f(x) = \fopt(x) + c$ for $c \in \RR$, can be absorbed in the unobserved confounding terms~$b_t$, hence rendering the observation sequences for $\fopt(x)$ and $\tilde f(x)$ indistinguishable. In particular, the learner can only hope to recover the true function up to an additive constant. Fortunately, to determine the best action $x^* = \argmax_{x \in \xX} \fopt(x)$, it suffices to estimate {\em reward differences} $\fopt(x^{1}) - \fopt(x^{2})$ for actions $x^1, x^2 \in \xX$, which is indeed possible. To do so, previous work in the linear setting relies on doubly-robust estimation \citep{krishnamurthy2018semiparametric,kim2019contextual}. Here  we take a different approach, and propose a generic reduction of the feedback model \eqref{eq:feedback-intro} to the \emph{dueling bandit} model \cite{yue2012k,sui2018advancements}. The reduction has the advantage that we can leverage existing algorithms for dueling bandits, which also eliminates the need to find low-variance sampling distributions for doubly-robust estimation. 

In dueling bandits, the learner chooses two actions $\xxt \in\xX$  and obtains (noisy) feedback on which of the two actions has higher reward. Meanwhile, the learner suffers regret for both actions, but the reward of each action is not observed. While most previous work on dueling bandits uses a binary feedback model, here, we are concerned with a quantitative version of the same setting, which is a special case of {\em linear partial monitoring} \cite{lin2014combinatorial}. Specifically, for a reward function $f : \xX \rightarrow \RR$ and actions $\xxt \in \xX$, we define \emph{quantitative dueling feedback} as follows:
\begin{align}
	\dd_t = \fopt(x_t^{1}) - \fopt(x_t^{2}) + \xi_t\,,\label{eq:dueling}
\end{align}
where $\xi_t$ is $\rho^2$-sub-Gaussian observation noise and $\rho$ is known to the learner. 
The distribution of $\xi_t$ is allowed to depend on $(x_t^{1}, x_t^{2})$ as long as the sub-Gaussian tail assumption is satisfied uniformly over all actions. 
Note that this includes binary feedback typically used for dueling bandits, and, more generally, bounded noise distributions.
Next, we present two reductions schemes to generate the dueling bandit feedback \eqref{eq:dueling} from confounded observations \eqref{eq:feedback-intro}. 

\paragraph{Two-Point Reduction} The first scheme uses two confounded observations to construct the dueling bandit feedback. Given two inputs $\xxt \in \xX$ in round $t$, we evaluate both points according to \eqref{eq:feedback-intro}, where the order of evaluation is uniformly randomized. The two observations are
\begin{align*}
	y_t^{1} &= \fopt(x_t^{1}) + b_{2t+i_t} + \epsilon_{2t+i_t}\,,\\
	y_t^{2} &= \fopt(x_t^{2}) + b_{2t + 1- i_t} + \epsilon_{2t+1 - i_t}\,,
\end{align*}
where $i_t \sim \Bernoulli(0.5)$. We then define 
\begin{align}
	\dd_t = y_t^{1} - y_t^{2}\,. \label{eq:two-point}
\end{align}
Assuming that $b_{2t}$ and $b_{2t+1}$ are fixed before either of $\xxt$ is chosen by the learner and using that the observation noise $\epsilon_t$ is zero-mean, one easily confirms that $\EE[\dd_t] = \fopt(x_t^{1})- \fopt(x_t^{2})$. We further use the following properties of sub-Gaussian random variables. Any random variable $X$ such that $X \in [-B,B]$ is $B^2$-sub-Gaussian. For two independent random variables $X_1, X_2$ that are $B_1^2$- and $B_2^2$-sub-Gaussian respectively, $X_1 + X_2$ is $(B_1^2 + B_2^2)$-sub-Gaussian. Hence if $|b_{2t} - b_{2t+1}| \leq D_{\max}$, it follows that \looseness=-1
\begin{align*}
	\xi_t &= \dd_t - \EE[\dd_t]\\
	&= b_{2t+i_t} + \epsilon_{2t+i_t} - (b_{2t + 1- i_t} + \epsilon_{2t+1 - i_t})
\end{align*} 
 is $(D_{\max}^2 + 2\sigma^2)$-sub-Gaussian.

\paragraph{One-Point Reduction} Perhaps surprisingly, one can also construct the dueling bandit feedback from a {\em single} observation using randomization.
For given inputs $\xxt \in \xX$ we choose one point uniformly
at random and evaluate the confounded function \eqref{eq:feedback-intro} to obtain a single observation
\begin{align*}
	y_t = f(x_t^{(1+i_t)}) + b_t + \epsilon_t\,,
\end{align*}
where $i_t \sim \Bernoulli(0.5)$. The dueling bandit feedback is
\begin{align}
	\dd_t =  (-1)^{i_t} \, 2 y_t \,. \label{eq:one-point}
\end{align}
Again, we get an unbiased observation of the reward difference, $\EE[\dd_t] = \fopt(x_t^{1}) - \fopt(x_t^{2})$. Further, if $|b_t| \leq C_{\max}$, then $\xi_t = \dd_t - \EE[\dd_t]$ is $4(C_{\max}^2 + \sigma^2)$-sub-Gaussian. Compared to the two-point reduction, here the sub-Gaussian variance $\rho$ depends on the absolute value $|b_t|$ of the confounding term instead of the difference $|b_t - b_{t+1}|$. On the other hand, the one-point sampling scheme only requires $b_t$ to be fixed before the choice of $x_t$, but may depend on \emph{all} previous actions and observations.

\section{Information-Directed Sampling}\label{sec:ids}
With the reduction to dueling feedback, we are set to readily apply any dueling bandit algorithm in the confounded setting. Furthermore, dueling bandits (as defined in \eqref{eq:dueling}) are a special case of partial monitoring, for which also several algorithms exist. 
In the following, we adapt the {\em information-directed sampling} (IDS) approach \citep{Russo2014learning}, more specifically, the version proposed by \citet{kirschner20partialmonitoring} for linear partial monitoring. The main reason for this choice is that IDS works with \emph{quantitative dueling feedback}, whereas most other work on dueling bandits focuses on settings with Bernoulli likelihood. Also, IDS can be formulated as a \emph{kernelized algorithm}, and comes with \emph{theoretical guarantees} on the regret. However, a direct adaptation of IDS to the dueling setting as proposed by \citet{kirschner20partialmonitoring} requires $\oO(|\xX|^4)$ computational steps per round for finite action sets. In the following, we introduce an approximation of IDS, which obtains the same theoretical guarantees and only requires to optimize a simple score function over the action set. The resulting efficient kernelized dueling bandit algorithm may be of independent interest. On a high level, IDS samples actions from a distribution that minimizes a trade-off between an estimate of the regret and an information gain, as we elaborate below.

\paragraph{Kernel Regression for Dueling Feedback}
The first step is to set up \emph{kernelized least-squares regression} for the dueling bandit feedback. Recall that we assume that $\hH$ is a RKHS with kernel function $k: \xX \times \xX \rightarrow \RR$ and $\fopt \in \hH$ with $\|\fopt\|_\hH \leq B$.
In round $t$, the learner has already collected data $\dD_t = \{(\xx{x_s}, \dd_s)\}_{s=1}^{t-1}$, where $\xx{x_s}\in \xX$ is the input pair chosen at step $s$, and $\dd_s$ is quantitative dueling bandit feedback defined in Eq.~\eqref{eq:dueling}. The kernel least-squares estimator with regularizer $\lambda > 0$ is
\begin{align}
	\hat f_t = \argmin_{f \in \hH} \sum_{s=1}^{t-1} \left(f(x_s^{1}) - f(x_s^{2}) - \dd_s\right)^2 + \lambda \|f\|_{\hH}^2
\end{align}
The solution corresponds to the posterior mean of the Gaussian process model with kernel $k$ and prior variance $\lambda^{-1}$ and can be computed in closed form. Let $\mathbf \dd_t = [d_1, \dots, d_{t-1}]^\T$ be the vector which collects the observations and define $K_t \in \RR^{t-1\times t-1}$ and $k_t(x) \in \RR^{t-1}$  for $x \in \xX$ as follows:
\begin{align*}
	[K_t]_{ij} &\deq k(x_i^1,x_j^1) - k(x_i^1,x_j^2) - k(x_i^2,x_j^1) + k(x_i^2, x_j^2)\\
	[k_t(x)]_i &\deq k(x,x_i^1) - k(x,x_i^2)
\end{align*}
The least-squares solution $\hat f_t$ evaluated at any $x \in \xX$ is $\hat{f}_t(x) = k_t(x)^\T (K_t + \lambda \eye)^{-1} \mathbf \dd_t$, where $\eye$ is the identity matrix in the appropriate dimension. To compute uncertainty estimates, we further define 
\begin{align}
	k_t(x,y) &\deq k(x,y) - k_t(x)^\T(K_t + \lambda\eye)^{-1}k_t(y)\,, \nonumber\\
	\psi_t(x,z) &\deq k_t(x,x)^2 + k_t(z,z)^2 - 2 k_t(x,z)\,. \label{eq:psi-def}
\end{align} 
For any $t \geq 1$, $x_1,x_2 \in \xX$, the estimate $\hat f_t$ satisfies with probability at least $1-\delta$,
\begin{align}
	&\left|\hat f_t(x^1) - \hat f_t(x^2) - \big(\fopt(x^1) - \fopt(x^2)\big)\right|^2\nonumber\\
	& \qquad \qquad\qquad \qquad \qquad \qquad \leq \beta_{t,\delta} \psi_t(x^1, x^2)\,,\label{eq:confidence}
\end{align}
where the \emph{confidence coefficient} is chosen as follows,
\begin{align*}
	\beta_{t,\delta}^{1/2} &\deq \rho\sqrt{\log \det( \eye + \lambda^{-1} K_t)  + 2\log \tfrac{1}{\delta}} + \sqrt{\lambda}B\,.
\end{align*}
The confidence bound is by \citet[Theorem 3.11]{AbbasiYadkori2012}, which improves upon earlier results by \citet{Srinivas2009}.

\paragraph{Gap Estimate} We use $\hat f_t$ to define an estimate $\hat \Delta_t(x)$ of the suboptimality gaps $\Delta(x) = \fopt(x^*) - \fopt(x)$ for all $x \in \xX$.
Let $\hx_t = \argmax_{x \in \xX} \hat f_t(x)$ be the empirical estimate of the maximizer. Note that $\hx_t$ is always defined, despite the fact that in general we can determine $\hat f_t$ only up to a constant shift (which does not affect the maximizer). Define
\begin{align*}
	\delta_t &\deq \max_{z \in \xX} \hat f_t(z) - \hat f_t(\hx_t) + \beta_{t,\delta}^{1/2} \psi_t(\hx_t, z)^{1/2}\,,
\end{align*}
where $\psi_t(x,z)$ is defined in \eqref{eq:psi-def}. Intuitively, $\delta_t$ is the largest plausible regret that the learner can occur from playing $\hx_t$ given the confidence estimates. The \emph{gap estimate} is defined for any $x \in \xX$ as follows:
\begin{align}
	\hat \Delta_t(x) \deq \delta_t + \hat f_t(\hx_t) - \hat f_t(x) \,. \label{eq:gap}
\end{align}
The gap estimate satisfies an upper bound on the true gaps, provided that \eqref{eq:confidence} holds, as summarized in the next lemma.
\begin{lemma}\label{lem:gap-bound}
	With probability $1-\delta$, for all $x \in \xX$ and $t \geq 1$,
	\begin{align*}
		\Delta(x) \leq 2 \hat \Delta_t\big(x)\,.
	\end{align*}
\end{lemma}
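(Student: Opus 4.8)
The plan is to work on the event of probability at least $1-\delta$ on which the confidence bound \eqref{eq:confidence} holds simultaneously for all $t \ge 1$ and all pairs $x^1,x^2 \in \xX$, and to fix such a $t$ and an arbitrary $x \in \xX$. Writing $x^* \in \argmax_{x\in\xX}\fopt(x)$ for the optimum and $\hx_t$ for the empirical maximizer, I split the gap at $\hx_t$,
\begin{align*}
	\Delta(x) = \big(\fopt(x^*) - \fopt(\hx_t)\big) + \big(\fopt(\hx_t) - \fopt(x)\big)\,,
\end{align*}
and bound the two summands separately, each by a quantity that will telescope against the $\delta_t$ term in \eqref{eq:gap}. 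Throughout I use that $\psi_t$ is symmetric, $\psi_t(u,v)=\psi_t(v,u)$, which follows from its definition \eqref{eq:psi-def} and the symmetry of $k$.

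For the first summand, apply \eqref{eq:confidence} with $x^1 = x^*$, $x^2 = \hx_t$ and take square roots to get $\fopt(x^*) - \fopt(\hx_t) \le \hat f_t(x^*) - \hat f_t(\hx_t) + \beta_{t,\delta}^{1/2}\psi_t(\hx_t,x^*)^{1/2}$. But $\delta_t$ is, by definition, the maximum over $z \in \xX$ of exactly this expression with $x^*$ replaced by $z$; evaluating it at $z = x^*$ shows $\fopt(x^*) - \fopt(\hx_t) \le \delta_t$.

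For the second summand, apply \eqref{eq:confidence} with $x^1 = \hx_t$, $x^2 = x$ to get $\fopt(\hx_t) - \fopt(x) \le \hat f_t(\hx_t) - \hat f_t(x) + \beta_{t,\delta}^{1/2}\psi_t(\hx_t,x)^{1/2}$. The key point is that the last summand is again re-absorbed: evaluating the maximum defining $\delta_t$ at $z = x$ gives $\delta_t \ge \hat f_t(x) - \hat f_t(\hx_t) + \beta_{t,\delta}^{1/2}\psi_t(\hx_t,x)^{1/2}$, i.e.\ $\beta_{t,\delta}^{1/2}\psi_t(\hx_t,x)^{1/2} \le \delta_t + \hat f_t(\hx_t) - \hat f_t(x) = \hat\Delta_t(x)$ by \eqref{eq:gap}. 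Since also $\hat f_t(\hx_t) - \hat f_t(x) = \hat\Delta_t(x) - \delta_t$, this yields $\fopt(\hx_t) - \fopt(x) \le 2\hat\Delta_t(x) - \delta_t$. Adding the two bounds, the $\delta_t$ terms cancel and $\Delta(x) \le 2\hat\Delta_t(x)$ follows.

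I do not expect a genuine obstacle here; the argument is elementary once \eqref{eq:confidence} is granted. The only things to be careful about are (i) matching the two correction terms $\beta_{t,\delta}^{1/2}\psi_t(\hx_t,\cdot)^{1/2}$ produced by the confidence bound to the exact form appearing inside the maximum that defines $\delta_t$, so that the right choice of $z$ (namely $z=x^*$, resp.\ $z=x$) controls them; and (ii) checking that the constant-shift ambiguity of $\hat f_t$ is irrelevant, which holds because every inequality above involves only differences $\hat f_t(\cdot)-\hat f_t(\cdot)$ and the well-defined maximizer $\hx_t$. The factor $2$ is precisely the cost of bounding $\fopt(x^*)-\fopt(\hx_t)$ and the residual correction term each by one copy of the plausible-regret quantity $\hat\Delta_t(x)$.
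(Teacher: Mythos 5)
Your proof is correct and follows essentially the same route as the paper's: decompose $\Delta(x)$ around $\hx_t$, apply the confidence bound \eqref{eq:confidence} twice, identify the first piece with $\delta_t$, and absorb the residual term $\beta_{t,\delta}^{1/2}\psi_t(\hx_t,x)^{1/2}$ by noting it is at most $\hat\Delta_t(x)$ (taking $z=x$ in the maximum defining $\delta_t$, using the symmetry of $\psi_t$). The only difference is cosmetic bookkeeping of where the factor $2$ is collected.
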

\begin{proof}
	With probability at least $1-\delta$,
	\begin{align*}
		\Delta(x) &= \fopt(x^*) - \fopt(x)\\
		&= \max_{z \in \xX} f(z) - f(\hx_t) + f(\hx_t) - f(x)\\
		&\stackrel{(i)}{\leq} \max_{z \in \xX} \hat f_t(z) - \hat f_t(\hx_t) + \beta_{t,\delta}^{1/2} \psi_t(z, \hx_t)^{1/2}\\
		&\qquad \qquad + \hat f_t(\hx_t) - \hat f_t(x) + \beta_{t,\delta}^{1/2} \psi_t(\hx_t, x)^{1/2}\\
		&\stackrel{(ii)}{=}\hat \Delta_t(x) + \beta_{t,\delta}^{1/2}\psi_t(\hx_t, x)^{1/2}\,.
	\end{align*}
	Here, $(i)$ uses \eqref{eq:confidence} twice and $(ii)$ is by definition of $\hat \Delta_t(x)$.
	On the other hand, 
	\begin{align*}
		\hat \Delta_t(x) &=  \max_{z \in \xX} \hat f_t(z) - \hat f_t(x) + \beta_{t,\delta}^{1/2}\psi_t(z, \hx_t)^{1/2}\\
		&\geq\beta_{t,\delta}^{1/2}\psi_t(\hx_t, x)^{1/2}\,.
	\end{align*}
We used that $\psi_t(\hx_t, x) = \psi_t(x, \hx_t)$ in the last step. 
	The claim follows with the last two displays combined.
\end{proof}

Note that this gap estimate is different from the choice proposed by \citet{kirschner20partialmonitoring}, and importantly, allows us to reduce computational complexity.

\paragraph{Information Gain} IDS further requires an information gain function $I_t(\xxx)$ defined for each action, which in our case consists of an input pair, $\xxx \in \xX$. Several choices were proposed by \citet{kirschner20partialmonitoring}. Here we use the differential of the log-determinant potential, \looseness=-1
\begin{align}
	I_t(\xxx) &\deq \log(1 + \psi_t(x^1, x^2))\,,\label{eq:infogain}\\
	&= \log \frac{\det (\eye + K_{t+1})}{\det (\eye + K_{t})} \nonumber
\end{align}
This choice of information gain corresponds to the Bayesian mutual information $\II_t\big(\dd_t;\fopt|(\xx{x_t})=(\xxx)\big)$ in the Gaussian process model. In particular, the \emph{total information gain} for \eqref{eq:infogain} resembles the Gaussian entropy
\begin{align}
	\gamma_n=\sum_{t=1}^n I_t(x_t^1, x_t^2) = \log \det(\eye_n + K_{n+1})\,.
\end{align}
The log-determinant depends on the kernel function and upper bounds are known for many popular choices. For example, the linear kernel, $k(x,x') = \ip{x,x'}$, satisfies $\gamma_n \leq \oO(d \log(n))$ and the RBF kernel, $k(x,x') = \exp(-\|x-x'\|_2^2/2)$, satisfies $\gamma_n \leq \log(n)^{d+1}$ \citep[Theorem 5]{Srinivas2009}.

\LinesNumbered
\RestyleAlgo{ruled}
\begin{algorithm2e}[t]
	\DontPrintSemicolon
	\SetAlgoVlined
	\SetAlgoNoLine
	\SetAlgoNoEnd
	\SetKwInput{Input}{Input}
	\caption{Approx.~IDS for Dueling Feedback} \label{alg:ids}
	\Input{Action set $\xX$, confidence coefficient $\beta_{t,\delta}(\delta)$,\\\texttt{DuelingFeedback($\xxt$)}, e.g.~via (\ref{eq:two-point}) or (\ref{eq:one-point})}
	\For{$t=1,2,3, \dots$}{
		$\hx_t \gets \argmax_{x \in \xX} \hat f_t(x)$\;
		$\hat \Delta_t(x) \gets \delta_t + \hat f_t(\hx_t) - \hat f_t(x)$ \tcp*{Eq.\ \eqref{eq:gap}}
		$I_t(x) \gets \log(1 + \psi_t(\hx_t, x))$ \tcp*{Eq.\ \eqref{eq:infogain}}
		$x_t, p \gets \argmin_{x \in \xX,p \in [0,1]} \frac{\left((1-p)\delta_t + p \hat \Delta_t(x)\right)^2}{pI_t(x)}$\;
		$B_t\sim \Bernoulli(p)$\;
		\If{$B_t == 1$, }{
			$(\xxt) \gets (\hx_t, x_t)$\;
			$\dd_t \gets $\texttt{DuelingFeedback($\xxt$)}\;
		}\Else{
		$(\xxt) \gets (\hx_t, \hx_t)$ \tcp*{No feedback}
	}
	}
\end{algorithm2e}

\paragraph{(Approximate) Information-Directed Sampling}
Given the gap estimate $\hat \Delta_t(x)$ and information gain $I_t(x^1, x^2)$, we optimize the following trade-off jointly over $\xX \times [0,1]$,
\begin{align}
	z_t, p_t \deq \argmin_{z \in \xX,p \in [0,1]} \frac{\big((1-p)\delta_t + p \hat \Delta_t(z)\big)^2}{pI_t(\hx_t, z)}\,. \label{eq:ids-approx}
\end{align}
Note that for fixed $z$, the optimal trade-off probability is $p(z) = \min\left(\frac{\delta_t}{\hat \Delta_t(z) - \delta_t}, 1\right)$. Consequently, IDS samples the pair $(\hx_t, z_t)$ with probability $p_t$, and otherwise, with probability $1-p_t$, the greedy pair $(\hx_t, \hx_t)$. Note that choosing the same action for dueling feedback provides no information, and in particular the least-squares estimate remains unchanged. The approach is summarized in Algorithm~\ref{alg:ids}.

%

\paragraph{Computational Complexity}
\looseness -1 As usual, computing the kernel estimates requires to invert the kernel matrix $K_t$. With incremental updates, the exact kernel estimate and all related quantities can be computed in $\oO(d^2n^3)$ steps in total over $n$ rounds. Note that $\hx_t$ and $\delta_t$ can be computed in $\oO(|\xX|)$ steps assuming that all kernel quantities have been pre-computed. 
The trade-off \eqref{eq:ids-approx} can be computed in $\oO(|\xX|)$ steps, since it only requires to evaluate the gap estimates $\hat \Delta(x)$ and information gain $I_t(\hx_t, x)$ for all $x \in \xX$. Therefore the overall complexity  is $\oO(d^2n^3|\xX|)$. Of course, Algorithm \ref{alg:ids} can also be applied in the linear setting without kernelization, in which case the overall complexity is $\oO(d^2n|\xX|)$.

\looseness -1 We remark that \eqref{eq:ids-approx} is an approximation of the IDS trade-off proposed by \citet{kirschner20partialmonitoring}, which requires to optimize a similar quantity over distributions $\sP(\xX \times \xX)$. This is also possible, but the direct implementation suggested in \citep{kirschner20partialmonitoring} requires $\oO(|\xX|^4)$ compute steps per round to calculate the gap estimates and to find the IDS distribution.

\subsection{Regret Bounds}

In the language of linear partial monitoring, the dueling feedback \eqref{eq:dueling} is a so-called \emph{locally observable game}, which informally means that any reward difference $f(x) - f(x')$ can be estimated from playing actions which have no more regret than playing either $x$ or $x'$ alone  \citep[Appendix C.5]{kirschner20partialmonitoring}. For dueling bandits, IDS (without the approximation and sampling scheme that we introduce here) has regret at most $R_n \leq \oO(\sqrt{n \beta_n \gamma_n})$, see \citet[Corollary 18]{kirschner20partialmonitoring}. Here we show that Algorithm~\ref{alg:ids} satisfies a similar result. Note that the regret guarantee applies generally to settings with quantitative dueling feedback, where we define regret as follows:
\newcommand{\RD}{R^\text{\textit {duel}}}
\begin{align*}
	\RD_n = \sum_{t=1}^n \Delta(x_t^1) + \Delta(x_t^2)\,.
\end{align*}
\newcommand{\bound}{\sqrt{n \beta_{n,\delta} (\gamma_n + \log\tfrac{1}{\delta})}}
\begin{theorem}\label{thm:worst-case}
	For $\rho^2$-sub-Gaussian dueling bandit feedback \eqref{eq:dueling},  Algorithm \ref{alg:ids} satisfies with probability at least $1-\delta$,\looseness=-1
	\begin{align*}
		\RD_n \leq \oO\Big(\bound \Big)\,.
	\end{align*}
\end{theorem}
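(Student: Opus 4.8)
The plan is to run the standard information-directed sampling argument, adapted to the randomized one-probe scheme of Algorithm~\ref{alg:ids}. I first condition on the event $\mathcal E$ that the confidence bound \eqref{eq:confidence} holds for all $t\ge1$ and all pairs; it has probability at least $1-\delta$, and on it Lemma~\ref{lem:gap-bound} applies. Let $\mathcal F_t$ denote the history up to round $t$, let $(z_t,p_t)$ be the minimiser in \eqref{eq:ids-approx}, and set $\hat r_t \deq (1-p_t)\delta_t + p_t\hat\Delta_t(z_t)$, the estimated surplus of the second action. Since $(x_t^1,x_t^2)$ equals $(\hx_t,z_t)$ with probability $p_t$ and $(\hx_t,\hx_t)$ otherwise, the conditional expected per-round dueling regret is
\begin{align*}
	\bar r_t \deq \EE[\Delta(x_t^1)+\Delta(x_t^2)\mid\mathcal F_t] = \Delta(\hx_t) + (1-p_t)\Delta(\hx_t) + p_t\Delta(z_t)\,.
\end{align*}
Using Lemma~\ref{lem:gap-bound}, the identity $\hat\Delta_t(\hx_t)=\delta_t$ (so $\Delta(\hx_t)\le 2\delta_t$), and $\hat r_t\ge\delta_t$ (because $\hat\Delta_t(z)\ge\delta_t$ for every $z$), I get $\bar r_t\le 2\delta_t+2\hat r_t\le 4\hat r_t$. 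By definition of $(z_t,p_t)$, the optimal IDS ratio $\Psi_t$ satisfies $\hat r_t^2 = \Psi_t\,p_t\,I_t(\hx_t,z_t)$.

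The next step is to show $\Psi_t\le\oO(\beta_{t,\delta})$ with a kernel- and dimension-independent constant. I would plug the plausible maximiser $\tilde z_t\deq\argmax_{z\in\xX}\big(\hat f_t(z)-\hat f_t(\hx_t)+\beta_{t,\delta}^{1/2}\psi_t(\hx_t,z)^{1/2}\big)$ into \eqref{eq:ids-approx}; unfolding the definition of $\delta_t$ gives $\hat\Delta_t(\tilde z_t)=\beta_{t,\delta}^{1/2}\psi_t(\hx_t,\tilde z_t)^{1/2}$ and $\delta_t\le\hat\Delta_t(\tilde z_t)$. Optimising the probability explicitly and invoking the elementary inequality $\min_{p\in(0,1]}(a+p(b-a))^2/p\le b^2$ for $0\le a\le b$, together with boundedness $\psi_t(\hx_t,\tilde z_t)\le\oO(1)$ (from $k(x,x)\le1$) and $v/\log(1+v)=\oO(1)$ for bounded $v$, yields $\Psi_t\le\oO(\beta_{t,\delta})$.

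Now Cauchy--Schwarz gives $\sum_{t=1}^n\hat r_t\le\sqrt{\max_t\Psi_t}\,\sqrt n\,\big(\sum_{t=1}^n g_t\big)^{1/2}$ with $g_t\deq p_t\,I_t(\hx_t,z_t)$. Writing $J_t$ for the realised information gain in round $t$ — equal to $I_t(\hx_t,z_t)$ when $B_t=1$ and $0$ on the greedy rounds, which change neither the estimator nor the potential — one has $g_t=\EE[J_t\mid\mathcal F_t]$ and $\sum_{t=1}^n J_t=\log\det(\eye+K_{n+1})\le\gamma_n$. Since $J_t\in\{0,I_t(\hx_t,z_t)\}$ with $I_t(\hx_t,z_t)\le\oO(1)$, we have $\EE[J_t^2\mid\mathcal F_t]\le\oO(1)\,g_t$, so a martingale Bernstein (Freedman) bound gives $\sum_t g_t\le 2\sum_t J_t + \oO(\log\tfrac1\delta)\le 2\gamma_n+\oO(\log\tfrac1\delta)$ with probability $1-\delta$. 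A second Freedman bound — using $\Delta(x)\le 2B$ so that the increments $\Delta(x_t^1)+\Delta(x_t^2)-\bar r_t$ are $\oO(B)$ with conditional variance $\le\oO(B)\bar r_t$ — gives $\RD_n\le\sum_t\bar r_t+\oO\big(\sqrt{B(\sum_t\bar r_t)\log\tfrac1\delta}+B\log\tfrac1\delta\big)$. Chaining $\sum_t\bar r_t\le 4\sum_t\hat r_t$, substituting $\max_t\Psi_t\le\oO(\beta_{n,\delta})$ and $\sum_t g_t\le\oO(\gamma_n+\log\tfrac1\delta)$, solving the resulting self-bounding inequality for $\RD_n$, and a union bound over the $\oO(1)$ failure events yields $\RD_n\le\oO(\bound)$.

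The step I expect to be the main obstacle is passing from $\sum_t g_t$ to $\gamma_n$: because information is collected only on the fraction $p_t$ of rounds where the probe is actually played, $\sum_t g_t$ is a predictable process that is \emph{not} deterministically bounded by $\gamma_n$, and a naive Azuma bound would add an $\oO(\sqrt{n\log\tfrac1\delta})$ term \emph{inside} the square root, degrading the rate to $n^{3/4}$. The self-bounding Bernstein argument, exploiting that the conditional second moment of the realised information gain is controlled by its conditional mean, is precisely what restores the $\sqrt{n\beta_n\gamma_n}$ scaling. A secondary point needing care is obtaining the dimension-free constant for $\Psi_t$ via the plausible-maximiser probe, and the bookkeeping that a greedy pair contributes no information and no regret beyond $2\Delta(\hx_t)$.
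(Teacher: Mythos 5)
Your proposal is correct and follows essentially the same route as the paper: condition on the confidence event to apply Lemma~\ref{lem:gap-bound}, bound the (approximate) information ratio by $\oO(\beta_{t,\delta})$ via the plausible maximizer $\tilde z_t$ together with $\psi_t \leq 3\log(1+\psi_t)$, and then convert $\sum_t \Psi_t \cdot (\text{total information})$ into the $\sqrt{n\beta_{n,\delta}\gamma_n}$ rate. The only difference is that the paper delegates the martingale bookkeeping (realized vs.\ expected regret and information gain) to the cited Lemma~1 of \citet{kirschner20partialmonitoring}, whereas you re-derive it explicitly with Freedman's inequality — correctly identifying that the self-bounding variance argument, rather than Azuma, is what preserves the $\sqrt{n}$ rate.
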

Note that the learner requires knowledge of the sub-Gaussian variance $\rho^2$ and the Hilbert norm bound $\|f\|_{\hH} \leq B$, which appear in the definition of the confidence coefficient $\beta_{n,\delta}$.
The regret guarantee has the same scaling as the best known bound for GP-UCB in standard Bayesian optimization \cite{Srinivas2009,Chowdhury2017}. Note that $\beta_{n,\delta} = \oO\left(\gamma_n + \log \frac{1}{\delta}\right)$, hence combined with bounds for the total information gain $\gamma_n$, we can derive bounds for specific choices of the kernel function. For example in the linear setting, we get $\RD_n \leq \oO\left(\rho \sqrt{n} (d\log(n) + \log \frac{1}{\delta})\right)$, which is the same as for LinUCB \citep{Abbasi2011improved}. For the RBF kernel, we get $\RD_n \leq \oO\left(\rho \sqrt{n} (\log(n)^{2d+2} + \log \frac{1}{\delta})\right)$.

Applied to the confounded setting with either the one- or two-point reduction, we get the following result. Note that depending on whether the learner requires one or two evaluations per round, the timescale differs by a factor of two.
\begin{corollary}\label{cor:confounded}
	In the confounded setting \eqref{eq:feedback-intro} with $\sigma^2$-sub-Gaussian observation noise and dueling feedback obtained via the one-point reduction \eqref{eq:one-point}, the regret of Algorithm \ref{alg:ids} satisfies with probability at least $1-\delta$,
	\begin{align*}
		R_n \leq \oO\Big((C_{\max}  + \sigma)\bound \Big)\,,
	\end{align*}
	assuming that $\max_{t\in[n]} b_t \leq C_{\max}$ and the adversary is allowed to choose $b_t$ depending on all previous actions and observations, $\{x_s,y_s\}_{s=1}^{t-1}$. 
	
	With the two-point reduction \eqref{eq:two-point}, Algorithm \ref{alg:ids} satisfies with probability at least $1-\delta$,
	\begin{align*}
		R_n \leq \oO\Big((D_{\max}  + \sigma)\bound \Big)\,,
	\end{align*}
	assuming that $\max_{t\in[n]} |b_{2t} - b_{2t +1}| \leq D_{\max}$ and the adversary is allowed to choose $b_t$ depending on all but the last two actions and observations, $\{x_s,y_s\}_{s=1}^{t-2}$. 
\end{corollary}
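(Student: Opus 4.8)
The plan is to obtain the corollary as a direct instantiation of Theorem~\ref{thm:worst-case}: run Algorithm~\ref{alg:ids} on quantitative dueling feedback \eqref{eq:dueling} that is synthesized from the confounded observations \eqref{eq:feedback-intro} through the one-point reduction \eqref{eq:one-point} (respectively the two-point reduction \eqref{eq:two-point}), invoke the dueling regret bound of the theorem, and translate the dueling regret $\RD_n$ back into the confounded regret $R_n$. Concretely, three things must be checked: \textbf{(i)} the synthesized $\dd_t$ is a legitimate instance of the quantitative dueling model \eqref{eq:dueling} with the stated sub-Gaussian parameter $\rho$ under the given adversary class; \textbf{(ii)} the hypotheses of Theorem~\ref{thm:worst-case} hold; and \textbf{(iii)} $R_n \le \RD_n$.

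For step~(i) with the one-point reduction: at iteration $t$ the algorithm commits to a pair $(\xxt)$; we then draw a fresh $i_t \sim \Bernoulli(0.5)$, evaluate $y_t = \fopt(x_t^{(1+i_t)}) + b_t + \epsilon_t$ according to \eqref{eq:feedback-intro}, and set $\dd_t = (-1)^{i_t} 2 y_t$. Assumption~(a) says $b_t$ is fixed before $x_t$ is played — hence before $i_t$ is drawn — and depends only on $\{x_s,y_s\}_{s<t}$, so $b_t$ is independent of $i_t$ and is a constant conditionally on the history and on $(\xxt)$. These are exactly the conditions under which the computation in the ``One-Point Reduction'' paragraph applies conditionally on $(\mathcal{F}_{t-1},\xxt)$, giving $\EE[\dd_t \mid \mathcal{F}_{t-1},\xxt] = \fopt(x_t^1) - \fopt(x_t^2)$ and $\xi_t = \dd_t - \EE[\dd_t \mid \cdot]$ being $4(C_{\max}^2 + \sigma^2)$-sub-Gaussian, i.e.\ \eqref{eq:dueling} with $\rho^2 = 4(C_{\max}^2 + \sigma^2)$. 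The two-point reduction is entirely parallel: one iteration spends two consecutive confounded evaluations in uniformly randomized order, and assumption~(b) — that $b_{2t}$ and $b_{2t+1}$ are both fixed before $(\xxt)$ is chosen and depend only on $\{x_s,y_s\}_{s\le 2t-2}$ — is precisely what makes $\EE_{i_t}[b_{2t+i_t} - b_{2t+1-i_t}] = 0$ while $|b_{2t+i_t} - b_{2t+1-i_t}| \le D_{\max}$, so the ``Two-Point Reduction'' paragraph yields \eqref{eq:dueling} with $\rho^2 = D_{\max}^2 + 2\sigma^2$.

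Step~(ii) is immediate: the RKHS $\hH$, the norm bound $\|\fopt\|_\hH \le B$, and the kernel bound $k(x,x)\le 1$ are unaffected by the reduction, and by step~(i) the feedback is $\rho^2$-sub-Gaussian with $\rho$ known to the learner; Theorem~\ref{thm:worst-case} therefore gives, with probability at least $1-\delta$, $\RD_n \le \oO\!\big(\rho\,\bound\big)$, where the confidence coefficient inside $\bound$ is the one associated with this $\rho$. For step~(iii), note that at each iteration the learner plays in the confounded environment either the single point $x_t^{(1+i_t)} \in \{x_t^1,x_t^2\}$ (one-point) or both points $x_t^1,x_t^2$ (two-point), and that on rounds with $B_t = 0$ all of these coincide with the greedy point $\hx_t$; since the gaps $\Delta(\cdot)$ are non-negative, the confounded regret incurred during iteration $t$ is at most $\Delta(x_t^1) + \Delta(x_t^2)$ in every case, hence $R_n \le \RD_n$. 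Substituting $\rho = 2\sqrt{C_{\max}^2+\sigma^2} \le 2(C_{\max}+\sigma)$ and $\rho = \sqrt{D_{\max}^2+2\sigma^2} \le 2(D_{\max}+\sigma)$ into the theorem's bound yields the two claimed inequalities. (If the horizon $n$ in the statement counts confounded \emph{evaluations}, then for the two-point reduction one runs $\lfloor n/2 \rfloor$ iterations and uses $\RD_{\lfloor n/2\rfloor} \le \RD_n$ — the factor-two timescale shift mentioned right before the corollary — which changes only universal constants.)

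Essentially all the work sits in Theorem~\ref{thm:worst-case}, which we are taking as given. The only genuinely delicate part of the present argument is step~(i): one must verify that the adversary power granted by assumptions~(a) and~(b) lines up exactly with what each reduction needs — that $b_t$ (respectively the pair $b_{2t},b_{2t+1}$) is committed before, and is independent of, the internal randomization $i_t$, and is constant given the chosen pair $(\xxt)$ — so that the unbiasedness and sub-Gaussian estimates of the reduction section hold in the conditional sense required by the kernel least-squares confidence bound \eqref{eq:confidence}. Everything else — the iteration-to-evaluation translation in the two-point case and inserting the numerical value of $\rho$ — is routine bookkeeping.
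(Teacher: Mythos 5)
Your proposal is correct and matches the paper's intended argument: the paper gives no explicit proof of the corollary, but the surrounding text (the one- and two-point reduction paragraphs establishing $\rho^2 = 4(C_{\max}^2+\sigma^2)$ and $\rho^2 = D_{\max}^2 + 2\sigma^2$ respectively, the remark on the factor-two timescale, and the bound $R_n \le \RD_n$ since each round's incurred regret is covered by $\Delta(x_t^1)+\Delta(x_t^2)$) is exactly the reduction-plus-Theorem~\ref{thm:worst-case} chain you execute.
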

Note that the algorithm requires knowledge of the bound $C_{\max}$ or $D_{\max}$ respectively, which is needed to compute $\beta_{t,\delta}$. This is in line with the previous work in the linear setting \citep{krishnamurthy2018semiparametric,kim2019contextual}. Removing or weakening this assumption is an interesting direction for future work. Note that in the linear case with the one-point reduction method, our regret bound is $R_n \leq \tilde \oO(C_{\max} d \sqrt{n})$, which matches the result by \citet{krishnamurthy2018semiparametric}. The dependence on $d$ and $n$ cannot be improved even in the un-confounded linear bandit setting for general $\xX$ \citep[Theorem 24.1]{lattimore2018bandit}. The results for the two-point reduction requires a stronger assumption on the sequence $(b_t)_{t=1}^{2n}$, but the regret only depends on the differences $|b_{2t} - b_{2t+1}|$. Hence, the result assures sub-linear regret even for settings where the confounding terms are unbounded, for example when the objective function is subject to drift.

\begin{proof}[Proof of Theorem \ref{thm:worst-case}]
	First, by Lemma \ref{lem:gap-bound} with probability at last $1-\delta$, $\Delta(x_t) \leq 2 \hat \Delta_t(x_t)$. We extend the definition of the gap estimate to two points, $\hat \Delta_t(x^1, x^2) = \hat \Delta(x^1) + \hat \Delta(x^2)$. For a sampling distribution $\mu \in \sP(\xX \times \xX)$, denote the expected gap $\hat \Delta_t(\mu) \deq \EE_{x^1,x^2 \sim \mu}[\hat \Delta_t(x^1, x^2)]$ and the expected information gain $I_t(\mu) = \EE_{x^1,x^2 \sim \mu}[I_t(x^1,x^2)]$. The \emph{information ratio} is defined as follows:
	\begin{align}
		\Psi_t(\mu) \deq \frac{\hat \Delta_t(\mu)^2}{I_t(\mu)} \,.\label{eq:info-ratio}
	\end{align}
	Let $(\mu_t)_{t=1}^n$ be the sequence of sampling distributions $\mu_t \in \sP(\xX \times \xX)$ defined by Algorithm \ref{alg:ids}, 
	\[\mu_t = (1-p_t) e_{(\hx_t, \hx_t)} + p_t e_{(\hx_t, z_t)}\,,\]
	 where $e_x$ denotes a Dirac on $x \in \xX$.
	By \citep[Lemma 1,][]{kirschner20partialmonitoring}, with probability $1-\delta$,
	\begin{align*}
		\RD_n \leq \sqrt{\sum_{t=1}^n \Psi_t(\mu_t) \left( \gamma_n + \oO\Big(\log \frac{1}{\delta}\Big)\right)} + \oO\left(\log \frac{n}{\delta}\right)\,.
	\end{align*}
	The claim in the theorem follows if we show that the information ratio is bounded such that $\Psi_t(\mu_t) \leq \oO(\beta_{n, \delta})$. To this end, note that
		\begin{align}
		\Psi_t(\mu_t) &= \frac{\big((1-p_t) 2 \delta_t + p_t (\hat \Delta_t(z_t) + \delta_t)\big)^2}{pI_t(\hx_t, z_t)} \nonumber\\
		 &\leq  \min_{x \in \xX, p \in [0,1]} \frac{4 \big((1-p)\delta_t + p \hat \Delta_t(x)\big)^2}{p I_t(\hx_t, x)} \nonumber\\
		 &\leq  \min_{x \in \xX} \frac{4 \hat \Delta_t(x)^2}{I_t(\hx_t, x)}  \label{eq:proof-1}\,,
	\end{align}
	where the first inequality follows from $\delta_t \leq \hat \Delta_t(x)$ and the definition of $z_t$ and $p_t$, and the second inequality sets $p=1$. 
	
	\looseness=-1 On the other hand, using that the kernel is bounded, $k(x,x) \leq 1$, one easily checks that $\psi_t(x^1, x^2) \leq 4$. With $a \leq 3 \log(1 + a)$ for all $a \in [0,4]$ we find for $\xxx \in \xX$
	\begin{align}
		\psi_t(\xxx) \leq 3 \log(1 + \psi_t(\xxx)) = 3I_t(\xxx)\,. \label{eq:proof-2}
	\end{align}
	Next, define $\tilde z_t = \argmax_{x \in \xX} \hat f_t(x) + \beta_{t,\delta}^{1/2} \psi_t(\hat x_t, x)^{1/2}$ and observe that $\hat \Delta_t(\tilde z_t) = \beta_{t,\delta}^{1/2} \psi_t(\hat x_t, \tilde z_t)^{1/2}$. The claim follows with \eqref{eq:proof-1} from noting that
	\begin{align}
		\Psi_t(\mu_t) &\leq \frac{4 \hat \Delta_t(\tilde z_t)^2}{I_t(\tilde z_t)} \leq 12 \frac{\beta_{t,\delta} \psi_t(\hat x_t, \tilde z_t)}{ \psi_t(\hat x_t, \tilde z_t)} = 12 \beta_{t,\delta}\,.\label{eq:proof-3}
	\end{align}
	The result follows from the fact that $\beta_{t,\delta}$ is monotonically increasing.
\end{proof}

Algorithm \ref{alg:ids} also satisfies a gap-dependent bound for finite action sets $\xX$. Let $\Delta_{\min} = \min_{x \neq x^*} \Delta(x)$ be the smallest non-zero gap and assume that $x^*$ is unique. The theorem applies to the confounded setting via the reduction method similar to Corollary \ref{cor:confounded}.
\begin{theorem}\label{thm:logarithmic}
	Assuming that $x^*$ is unique, the regret of Algorithm \ref{alg:ids} satisfies with probability at least $1-\delta$,
	\begin{align*}
		\RD_n \leq \oO\left(\Delta_{\min}^{-1} \beta_{n,\delta}( \gamma_n + \log \tfrac{n}{\delta})\right)\,.
	\end{align*}
\end{theorem}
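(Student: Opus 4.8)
The plan is to upgrade the information-ratio bound $\Psi_t(\mu_t) \le 12\beta_{t,\delta}$ established in the proof of Theorem~\ref{thm:worst-case} to a logarithmic bound using uniqueness of $x^*$. Throughout I condition on the event of Lemma~\ref{lem:gap-bound} (equivalently, the confidence event~\eqref{eq:confidence}), which holds with probability $1-\delta$, so that $\Delta(x) \le 2\hat\Delta_t(x)$ for all $x,t$ and every suboptimal action has gap at least $\Delta_{\min}$. Then the realized per-round regret $R_t \deq \Delta(x_t^1) + \Delta(x_t^2)$ satisfies $R_t \le R_t^2/\Delta_{\min}$ whenever $R_t > 0$, hence $\RD_n \le \Delta_{\min}^{-1}\sum_{t=1}^n R_t^2$. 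Passing to conditional expectations and using that the played pair is $(\hx_t,\hx_t)$ with probability $1-p_t$ and $(\hx_t,z_t)$ with probability $p_t$, together with $\Delta(\hx_t)\le 2\delta_t$ and $\Delta(z_t)\le 2\hat\Delta_t(z_t)$, the expected contribution of round $t$ is controlled, up to constants, by $\delta_t^2\,\mathbb{1}[\hx_t\ne x^*]$ and $p_t\hat\Delta_t(z_t)^2\,\mathbb{1}[z_t\ne x^*]$; keeping the indicators is essential, so that the greedy rounds sitting at $x^*$ contribute nothing.

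The central estimate is $\sum_t\delta_t^2 \le \oO(\beta_{n,\delta}(\gamma_n+\log\tfrac1\delta))$: since $\delta_t \le \hat\Delta_t(\mu_t)$ and $\hat\Delta_t(\mu_t)^2 = \Psi_t(\mu_t)I_t(\mu_t) \le 12\beta_{t,\delta}\,p_tI_t(\hx_t,z_t)$, it suffices that $\sum_t p_tI_t(\hx_t,z_t) = \sum_t\EE[I_t(x_t^1,x_t^2)\mid\hH_{t-1}]$ telescopes to $\oO(\gamma_n+\log\tfrac1\delta)$, which follows from $\sum_t I_t(x_t^1,x_t^2) = \gamma_n$, the bound $I_t \le \log 5$ (from $\psi_t\le 4$), and a Freedman-type inequality. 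This immediately disposes of the rounds with $\hx_t\ne x^*$: there $\delta_t\ge\Delta_{\min}/2$, hence $\delta_t\le 2\delta_t^2/\Delta_{\min}$, so $\sum_{t:\hx_t\ne x^*}\delta_t \le \oO(\Delta_{\min}^{-1}\beta_{n,\delta}(\gamma_n+\log\tfrac1\delta))$, and via Lemma~\ref{lem:gap-bound} (and $p_t\hat\Delta_t(z_t)\le 2\delta_t$ when $p_t<1$) the same bound controls all regret incurred on those rounds. It also disposes of the rounds with $p_t = 1$, where $(\hx_t,z_t)$ is played deterministically: there $I_t(\hx_t,z_t)=I_t(x_t^1,x_t^2)$ telescopes and $\hat\Delta_t(z_t)^2\le 12\beta_{t,\delta}I_t(\hx_t,z_t)$, so the informative-action regret $\Delta(z_t)\le 4\hat\Delta_t(z_t)^2/\Delta_{\min}$ sums to $\oO(\Delta_{\min}^{-1}\beta_{n,\delta}\gamma_n)$ (the greedy-action regret on these rounds is already accounted for in the previous case when $\hx_t\ne x^*$, and vanishes otherwise). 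A final Freedman-type step converts these conditional-expectation bounds into a bound on $\RD_n$, producing the $\log\tfrac n\delta$ term (after a union bound over $t$ to replace $\beta_{t,\delta}$ by $\beta_{n,\delta}$); applying the one- or two-point reduction then yields the confounded-setting statement exactly as in Corollary~\ref{cor:confounded}.

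The hard part is the remaining rounds: $\hx_t = x^*$, the informative action $z_t\ne x^*$, and $p_t<1$. The naive estimate $p_t\hat\Delta_t(z_t)^2 \le 12\beta_{t,\delta}I_t(\hx_t,z_t)$ is too lossy, because summed over all rounds it leaves the quantity $\sum_t I_t(\hx_t,z_t)$ --- the \emph{would-be} information gain, counted even on rounds where $(\hx_t,z_t)$ is never played --- which is only $\oO(n)$, not $\oO(\gamma_n)$, in general. Closing this requires exploiting the explicit IDS optimizer $p_t = \min(\delta_t/(\hat\Delta_t(z_t)-\delta_t),1)$: when $p_t<1$ the numerator of~\eqref{eq:ids-approx} collapses to $4\delta_t^2$ and $z_t$ is (roughly) the point maximizing an ``information-per-excess-regret'' ratio, so that a small sampling probability forces a correspondingly small per-round regret $p_t\Delta(z_t)\le 4\delta_t$. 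The needed bound amortizes the contribution of these exploratory rounds --- separating the few rounds where $\delta_t$ is non-negligible, which via $\delta_t^2\le 12\beta_{t,\delta}\,p_tI_t(\hx_t,z_t)$ can be counted against $\sum_t p_tI_t(\hx_t,z_t)=\oO(\gamma_n+\log\tfrac1\delta)$, from the rest --- and this trade-off between the sub-unit sampling probabilities of the randomized scheme and the per-round regret, which parallels the gap-dependent analyses of IDS in \citet{kirschner20partialmonitoring} and related work, is the main technical obstacle; it is what ultimately keeps the exploratory rounds at $\oO(\Delta_{\min}^{-1}\beta_{n,\delta}\gamma_n)$.
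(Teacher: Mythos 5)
Your proposal diverges from the paper's argument and has a genuine gap at precisely the point you yourself flag as ``the hard part'': the exploratory rounds with $\hx_t=x^*$, $z_t\neq x^*$ and $p_t<1$. You correctly diagnose that the naive estimate $p_t\hat\Delta_t(z_t)^2\le 12\beta_{t,\delta}I_t(\hx_t,z_t)$ leaves behind $\sum_t I_t(\hx_t,z_t)$, the information gain of pairs that are mostly \emph{not} played, which does not telescope to $\gamma_n$. But you then only assert that ``exploiting the explicit IDS optimizer'' and an unspecified amortization closes this; no argument is actually given, so the contribution of these rounds is not controlled and the proof is incomplete.

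The paper's proof avoids this issue entirely and requires none of your case analysis. It establishes the uniform per-round bound $\Psi_t(\mu_t)\le \oO(\delta_t\beta_{t,\delta}/\Delta_{\min})$ by evaluating the IDS objective \eqref{eq:ids-approx} at the specific candidate $\tilde z_t=\argmax_{x}\hat f_t(x)+\beta_{t,\delta}^{1/2}\psi_t(\hx_t,x)^{1/2}$, for which $\hat\Delta_t(\tilde z_t)=\beta_{t,\delta}^{1/2}\psi_t(\hx_t,\tilde z_t)^{1/2}$. Minimizing over $p$ gives $16\delta_t\hat\Delta_t(\tilde z_t)/I_t(\hx_t,\tilde z_t)$ when $2\delta_t\le\hat\Delta_t(\tilde z_t)$ (and $12\beta_{t,\delta}$ otherwise, via \eqref{eq:proof-3}); combined with \eqref{eq:proof-2} the information gain cancels against $\hat\Delta_t(\tilde z_t)^2$, leaving $\oO(\delta_t\beta_{t,\delta}/\hat\Delta_t(\tilde z_t))\le\oO(\delta_t\beta_{t,\delta}/\Delta_{\min})$, where the last step uses uniqueness of $x^*$ and Lemma~\ref{lem:gap-bound}. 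Since $\delta_t\le\tfrac12\hat\Delta_t(x_t^1,x_t^2)$ for the played pair, substituting this into the regret inequality of \citep[Lemma 1]{kirschner20partialmonitoring} applied to the scaled gain $\beta_{t,\delta}I_t$ yields a self-bounding inequality of the form $S\le\sqrt{\Delta_{\min}^{-1}S\cdot\oO\big(\beta_{n,\delta}(\gamma_n+\log\tfrac{n}{\delta})\big)}$ for $S=\sum_t\hat\Delta_t(x_t^1,x_t^2)$, and solving for $S$ gives the theorem. All martingale concentration relating expected to realized information gain and regret is already packaged inside that lemma, so the separate Freedman-type steps in your outline are unnecessary. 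If you wish to salvage your decomposition, the missing ingredient is exactly this cancellation at $\tilde z_t$ together with the self-bounding step; without it, the exploratory rounds remain unbounded.
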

For linear bandits, the regret bound reads $\RD_n \leq \oO(\Delta_{\min}^{-1}d^2 \log(n)^2)$ and for the RKHS setting with RBF kernel, the bound is $\RD_n \leq \oO(\Delta_{\min}^{-1} \log(n)^{2d+2})$.

\begin{proof}[Proof of Theorem \ref{thm:logarithmic}] Our proof uses the strategy introduced by \citet{kirschner2020asymptotically} that relies on finding an instance-dependent bound on the information ratio.
		Let $\Psi_t(\mu_t)$ be the information ratio defined in \eqref{eq:info-ratio}. We apply \citep[Lemma 1 with $\eta=0$]{kirschner20partialmonitoring} to the scaled information gain $\tilde I_t = \beta_{t,\delta} I_t$, to find with probability $1-\delta$,
	\begin{align*}
		\sum_{t=1}^n \hat \Delta_t(x_t^1, x_t^2)\leq \sqrt{\sum_{t=1}^n \frac{\Psi_t(\mu_t)}{\beta_{t,\delta}} \left(\beta_{n,\delta}( \gamma_n + \oO(\log \tfrac{n}{\delta}))\right)}
	\end{align*}
	We condition now on the event that the previous equation and the confidence estimate in \eqref{eq:confidence} hold simultaneously. As before, let $\tilde z_t = \argmax_{x \in \xX} \hat f_t(x) + \beta_{t,\delta}^{1/2} \psi_t(\hat x_t, x)^{1/2}$. We may assume that $\tilde z_t \neq \hat x_t$, since otherwise $\delta_t = 0$ and therefore $\Psi_t(\mu_t) = 0$. On the other hand, this implies that $2 \hat \Delta_t(\tilde z_t) \geq \Delta_{\min}$ by \eqref{lem:gap-bound} and using that $x^*$ is unique.
	
	Next, we reuse the inequality leading to \eqref{eq:proof-1} to find
	\begin{align*}
		\Psi_t(\mu_t) \leq \min_{p \in [0,1]} \frac{4 \big((1-p)\delta_t + p \hat \Delta_t(\tilde z_t)\big)^2 }{p I_t(\hat x_t, \tilde z_t)}\,. 
	\end{align*}
	First, consider the case where $2 \delta \leq \hat \Delta_t(\tilde z_t)$. Computing the minimizer of the previous display and using \eqref{eq:proof-2}, we find
	\begin{align*}
		\Psi_t(\mu_t) \leq  \frac{16 \delta_t \hat \Delta_t(\tilde z_t) }{I_t(\hat x_t, \tilde z_t)} \leq \frac{48 \delta_t \beta_{t, \delta}}{\hat \Delta_t(\tilde z_t)} \leq \frac{96 \delta_t \beta_{t, \delta}}{\Delta_{\min}}\,.
	\end{align*}
	For the other case where $2\delta_t > \hat \Delta_t(\tilde z_t)$, using \eqref{eq:proof-3} directly gives
	\begin{align*}
		\Psi_t(\mu_t) \leq 12 \beta_{t,\delta} \leq \frac{24 \delta_t \beta_{t, \delta}}{\hat \Delta_t(\tilde z_t)} \leq \frac{48 \delta_t \beta_{t, \delta}}{\Delta_{\min}}\,.
	\end{align*}
	
	Finally, note that $\delta_t \leq \frac{1}{2}\hat \Delta_t(\xxt)$. Using the bound on the information ratio, and solving for the regret, we find
	\begin{align*}
		\sum_{t=1}^n \hat \Delta_t(x_t^1, x_t^2)  \leq \oO\left(\Delta_{\min}^{-1} \beta_{n,\delta}( \gamma_n + \log \tfrac{n}{\delta})\right)\,.
	\end{align*}
The claim follows with Lemma \ref{lem:gap-bound} by noting that,
\begin{align*}
	\RD_n &\leq \sum_{t=1}^n 2 \hat \Delta_t(x_t^1, x_t^2)\,.   \qedhere
\end{align*}
\end{proof}

\subsection{A Connection to Doubly-Robust Estimation}
In the finite-dimensional, linear case, the objective function is $\fopt(x) = \ip{x, \theta}$ for a fixed parameter $\theta \in \RR^d$. To obtain an estimate $\hat \theta_t$ of the unknown parameter $\theta$ directly from confounded data $\{(x_s, y_s=\ip{x_s, \theta} + b_s + \epsilon_s)\}_{s=1}^{t-1}$, \citet{krishnamurthy2018semiparametric} use a randomized policy $x_t \sim \mu_t$ and a doubly-robust estimation approach. For centered feature vectors $\bar x_t = \EE_{x \sim \mu_t}[x]$ and regularizer $\lambda > 0$, they define \looseness=-1
\begin{align}
	\Gamma_t &\deq \sum_{s=1}^{t-1} (x_t - \bar x_t)(x_t - \bar x_t)^\T + \lambda \eye_d\,,\nonumber\\
	\hat \theta_t &\deq \Gamma_t^{-1}\sum_{s=1}^{t-1} (x_t - \bar x_t)y_t.\label{eq:dr-estimator}
\end{align}
They further derive the following high-probability bound for doubly-robust estimator:
\begin{align}
	\|\hat \theta_t - \theta\|_{\Gamma_t}^2 \leq \oO(d \log(n) + \log(n/\delta) + \lambda)\,.\label{eq:dr-concentration}
\end{align}
Interestingly, when $\mu_t = \Uniform(\{x^1,x^2\})$ is chosen to randomize between two actions $x^1,x^2 \in \xX$, then this estimator coincides with the least-squares estimator that we obtain for the dueling bandit feedback.  This follows immediately from noting that $2(x_t - \bar x_t) = x^1 - x^2$. Further, the concentration bounds are on the same quantity, but the reduction avoids the detour to prove the (more general) concentration bound \eqref{eq:dr-concentration} and leads tighter bounds. We remark that in general, the BOSE algorithm by \citet{krishnamurthy2018semiparametric} requires to compute a sampling distributions $\mu_t$ supported on $d+1$ points.
We also note that \citet{kim2019contextual} propose another variant of the doubly-robust estimator \eqref{eq:dr-estimator}. This estimator coincides with our estimation scheme in the same way.

\begin{figure}[t]
	\includegraphics{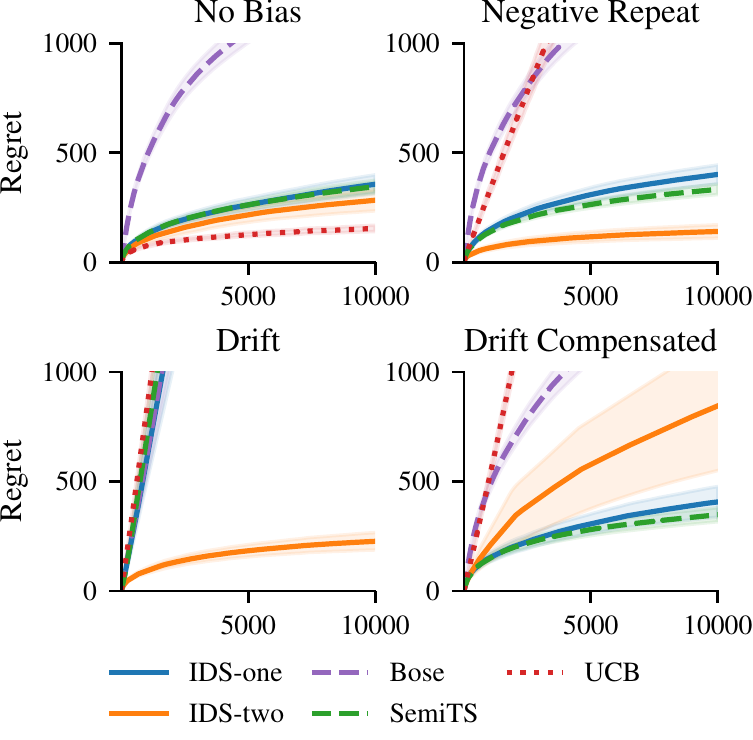}
	\vspace{-5pt}
	\caption{Benchmarks with randomly sampled action sets of size 20 and $d=4$. The confidence region shows 2$\times$ standard error over 50 repetition. Without confounding, UCB performs best, and is closely followed by IDS and Semi-TS. With additive bias the performance of UCB degrades significantly, whereas the robust methods maintain sublinear regret. Note that in the drift experiment (bottom left), the bias is unbounded, which leads to linear regret for all methods expect IDS-two. With bounded bias (right column), IDS-one and Semi-TS show similar performance, whereas the performance of IDS-two is varying. The BOSE algorithm shows sublinear behavior but is much more conservative. }
	\label{fig:linear}
\end{figure}

\begin{figure}[t]
	\includegraphics{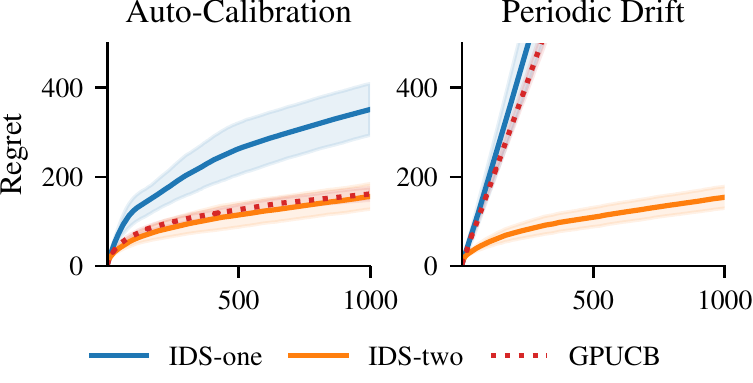}
	\vspace{-5pt}
	\caption{Performance of kernelized algorithms on the camelback benchmark. The confidence region shows 2$\times$ standard error over 50 repetitions. With bounded confounding that simulates a re-calibration process (left), GP-UCB is surprisingly competitive with IDS-two, followed by IDS-one. When the feedback is confounded by an unbounded, periodic drift (right), we observe linear regret of GP-UCB and IDS-one (because the bias is unbounded), whereas IDS-two maintains good performance despite the confounding.}
	\label{fig:camelback}
\end{figure}

\section{Experiments}
We evaluate the proposed method with the one-point reduction (IDS-one) and the two-point reduction (IDS-two) in two numerical experiments with confounded observations. To allow a fair comparison with the two-sample scheme, we account for the regret of both evaluations and scale the x-axis appropriately.

\subsection{Baselines}
\paragraph{UCB} For the linear setting, LinUCB \cite{auer2002confidencebounds} is implemented as in \citep[Figure 1]{Abbasi2011improved} using a regularizer $\lambda = 1$ and confidence coefficient $\beta_{t,\delta}^{1/2} = \sqrt{\log \det V_t + 2 \log \frac{1}{\delta}} + 1$. In the kernelized setting, we use GPUCB \cite{Srinivas2009} with an empirically tuned confidence coefficient $\beta_{n, \delta} = 1$. As shown by \citet{bogunovic2020corruption}, increasing the confidence coefficient to a larger value as required in the stochastic setting can lead to better robustness, although we did not see an improvement of performance in our experiments. 

\paragraph{BOSE} The BOSE algorithm \citep[Algorithm 1]{krishnamurthy2018semiparametric} uses the doubly-robust least-squares estimator \eqref{eq:dr-estimator}. We set the required concentration coefficient $\beta_{t,\delta}^{\text{DR}}$ to
\begin{align*}
	\beta_{t,\delta}^{\text{DR}} = \sqrt{d \log(1 + t/d) + 2\log\left(\tfrac{t}{\delta}\right)} + 1\,,
\end{align*}
where we drop (conservative) constants required for the theoretical results in favor of better empirical performance. BOSE requires to solve a convex-quadratic feasibility problem on the space of sampling distributions over the remaining plausible actions, and no specific computation method was recommended by the authors. We compute the sampling distribution by solving the saddle point problem stated in \citep[Appendix D]{krishnamurthy2018semiparametric} using exponentiated gradient descent.

\paragraph{SemiTS} The semi-parametric Thompson sampling is implemented as in \citep[Algorithm 1]{kim2019contextual}, with a less conservative over-sampling parameter $v=\sqrt{2 \log(t/\delta)}$. Our choice improves performance over the theoretical value. We also remark that SemiTS requires to compute the probability of each action being optimal under a Gaussian perturbation of the mean parameter. We do so by computing the empirical sampling probabilities from 1000 random samples per round, the alternative being to compute Gaussian integrals over $d$-dimensional polytopes $\cC_x = \{ \nu \in \RR^d : \ip{x, \nu} \geq \max_{x' \in \xX} \ip{x', \nu}  \}$. While SemiTS is significantly faster than BOSE in our implementation, computing the posterior probabilities accurately for larger action sets remains challenging.

In all experiments we set confidence level $\delta=0.05$.

\subsection{Environments}

\paragraph{Linear Reward} In the first experiment, we use a linear reward function $f(x) = \ip{x, \theta}$. For each repetition we sample $k=20$ actions uniformly on the $d=4$ dimensional unit sphere. We add Gaussian observation noise with variance $\sigma^2=1$, that is  $\epsilon_t \sim \nN(0, 1)$ in \eqref{eq:feedback-intro}. In this setting we compare to BOSE, SemiTS and  LinUCB \citep{auer2002confidencebounds,Abbasi2011improved}, where the latter does not directly deal with the confounding. We consider four different types of confounding: \textit{a)} \textit{no bias}; \textit{b)} the adversary repeats the last observation with a minus sign, $b_t = -y_{t-1}$, which makes it much harder to identify the best action \textit{(negative repeat}); \textit{c)} a continues \emph{drift}, $b_t = -0.1t$, i.e.\ unbounded confounding; and \textit{d)} same as the previous,  but with \emph{compensated drift}, $b_t = -0.1t + y_{t-1}$, thereby making the bias terms bounded but dependent on the previous observation.

The result is shown in Figure \ref{fig:linear}. As expected, in the unconfounded setting UCB works best, followed by both IDS variants and SemiTS with reasonable performance. With confounding, the regret of UCB is increased by a lot, whereas BOSE shows sublinear behaviour but is relatively inefficient. In the example with unbounded bias (drift) only IDS-two performs well, as in fact the theoretical assumptions for all other methods are invalidated. With bounded bias (i.e.~negative repeat and compensated drift), SemiTS and IDS-one are competitive, while IDS-two clearly outperforms the baselines in the \emph{negative repeat} experiment.

\paragraph{Camelback} Our second experiment is in the non-linear, kernelized setting with observation noise variance $\sigma^2=0.1$. As benchmark we choose the camelback function on the domain $[-2,2] \times [-1,1]$,
 \begin{align*}
	&f(x_1, x_2) = \\
	&- \min\left(x_1^2 \big(4 - 2.1x_1^2 + \tfrac{x_1^4}{3.}\big) + x_1x_2 + x_2^2(4x_2^2-4), 2.5\right)
\end{align*} 
\looseness -1 We discretize the input space using 30 points per dimension. The only direct competitor that we are aware of is the method of \citet{bogunovic2020corruption}. This method is, however, equivalent to GP-UCB \citep{Srinivas2009} with an up-scaled confidence coefficient. This suggests that the UCB approach is inherently robust up to a certain degree of corruption, which is also visible in our experiment. For both algorithms, we use an RBF kernel with lengthscale $0.2$ and regularizer $\lambda = 1$, and set $\beta_{n,\delta}=1$ in favor of better empirical performance.  We use two types of confounding that we expect is relevant in applications: \textit{a)} a calibration process, which monitors a moving average over the last 10 observations and adjusts the output range to $[-0.1, 0.1]$ whenever the average is no longer in this range; and \textit{b)} periodic drift of the objective, $b_t = \text{sin}(0.2t) - 0.1t$. Results are shown in Figure \ref{fig:camelback}. In the first variant GPUCB works surprisingly well despite the confounding and is on-par with IDS-two. With unbounded drift, both GPUCB and IDS-one obtain linear regret, whereas the performance of IDS-two in unaffected.

\section{Conclusion}

We introduced randomized evaluation schemes based on pair-wise comparisons that make dueling bandit algorithms applicable to robust optimization with additive confounding. Moreover, we derived a kernelized dueling bandit algorithm based on recent ideas by \citet{kirschner20partialmonitoring}. The resulting algorithm satisfies worst-case and gap-dependent regret bounds on the cumulative regret and could be of broader interest in the dueling bandit setting. Our numerical experiments validate the theoretical findings.

\section*{Acknowledgements}
This research has received funding from the European Research Council (ERC) under the European Union’s Horizon 2020 research and innovation programme grant agreement No 815943.

\bibliography{references}
\bibliographystyle{icml2021}


\end{document}